\documentclass{article}

\usepackage[preprint]{ewrl_2026}

\usepackage[utf8]{inputenc} 
\usepackage[T1]{fontenc}    
\usepackage{hyperref}       
\usepackage{url}            
\usepackage{booktabs}       
\usepackage{amsfonts}       
\usepackage{nicefrac}       
\usepackage{microtype}      
\usepackage{xcolor}         
\usepackage{setspace}
\usepackage{a4wide}
\usepackage{amsfonts}
\usepackage{lipsum}
\usepackage[ruled,vlined,linesnumbered]{algorithm2e}
\usepackage{subcaption} 
\usepackage{graphicx}  

\usepackage{xcolor}  
\usepackage{listings}  
\usepackage{enumitem}
\usepackage{amsthm}
\usepackage{amssymb}
\usepackage{amsmath}
\usepackage{xifthen}
\usepackage{xparse}
\usepackage{dsfont}
\usepackage{mathrsfs}

\newtheorem{theorem}{Theorem}
\newtheorem{definition}[theorem]{Definition}

\newtheorem{lemma}[theorem]{Lemma}
\newtheorem{corollary}[theorem]{Corollary}
\newtheorem{remark}[theorem]{Remark}

\newcommand{\M}{\mathcal{M}}  
\newcommand{\Ss}{\mathcal{S}} 
\newcommand{\A}{\mathcal{A}} 
\newcommand{\U}{\mathcal{U}} 
\newcommand{\D}{\mathcal{D}} 

\newcommand{\Regret}{\operatorname{Regret}}

\newcommand{\argmax}{\operatorname{argmax}}

\newcommand{\lr}[1]{\left (#1\right)}

\newcommand{\lrs}[1]{\left [#1 \right]}

\newcommand{\lrc}[1]{\left \{#1\right\}}

\newcommand{\lra}[1]{\left |#1\right|}

\newcommand{\lri}[1]{\left\langle#1\right\rangle}

\newcommand{\lrv}[1]{\left\lVert #1 \right\rVert}

\newcommand{\R}{\mathbb R}

\newcommand{\N}{\mathbb N}

\NewDocumentCommand{\E}{o}{\mathbb E\IfValueT{#1}{\lrs{#1}}}

\NewDocumentCommand{\1}{o}{\mathds 1{\IfValueT{#1}{\lr{#1}}}}

\let\P\undefined
\NewDocumentCommand{\P}{o}{\mathbb P{\IfValueT{#1}{\lr{#1}}}}

\newcommand{\HH}{\mathcal H}

\newcommand{\YY}{\mathcal{Y}}

\DeclareMathOperator{\V}{Var}
\NewDocumentCommand{\Var}{o}{\V\IfValueT{#1}{\lrs{#1}}}

\title{Randomized Least Squares Value Iteration itself is Joint Differentially Private}

\author{
Haiyang Lu$^{1}$ \quad
Pratik Gajane$^{1}$ \quad
Shaojie Bai$^{2}$ \quad
Mohammad Sadegh Talebi$^{3}$ \\[0.45em]
{\small $^{1}$Laboratoire d'Informatique Fondamentale d'Orléans (LIFO), Université d'Orléans} \\
{\small $^{2}$College of Control Science and Engineering, Zhejiang University} \\
{\small $^{3}$Department of Computer Science, University of Copenhagen} \\[0.35em]
{\footnotesize
\texttt{\{haiyang.lu, pratik.gajane\}@univ-orleans.fr}} \\
{\footnotesize
\texttt{white.shaojie@gmail.com}, \texttt{sadegh.talebi@di.ku.dk}}
}

\begin{document}

\maketitle

\begin{abstract}
  As reinforcement learning (RL) increasingly applies to sensitive domains, such as health care and recommendation systems, privacy-preserving techniques have become essential to protect users' sensitive information. We investigate privacy-preserving RL under an episodic setting, focusing on algorithms based on randomized exploration, such as Randomized Least Squares Value Iteration (RLSVI). The overall goal is to study how randomized exploration interacts with the injected noise required by privacy mechanisms. In this work, we show a new privacy analysis that characterizes how the noise in RLSVI set for exploration simultaneously provides privacy protection. Specifically, we prove that RLSVI is $(\varepsilon(\delta),\delta)$-joint differentially private in tabular MDP as is with $\varepsilon(\delta) = \frac{2AK}{H^2\log(2HSA)} + 2\sqrt{\frac{2AK\log(1/\delta)}{H^2\log(2HSA)}}$, where $S$ and $A$ are the number of states and actions respectively, $H$ is the length of an episode and $K$ is the number of episodes.
\end{abstract}

\section{Introduction}
Reinforcement learning (RL) is a framework for sequential decision-making in which an agent learns to maximize its expected cumulative reward in an unknown environment through trial and error. Unlike supervised and unsupervised learning, which are typically trained on fixed datasets, the agent in online RL learns through interaction with the environment by adjusting its action policy based on the feedback received, making it necessary to balance exploration and exploitation \citep{sutton2018_ReinforcementLearningIntroduction}. In some domains, such as health care and recommendation systems, the agent continuously receives feedback from users that necessarily involves their personal information to improve its performance. Take a medical treatment scenario for example, a clinician recommends a medication to a patient, observes its effects, and updates the medication choice accordingly based on the patient's response. During the learning process, sensitive information of the patient, such as their health status, is accessed by the clinician (i.e., the agent in RL context) and causes privacy concerns. As a result, privacy-preserving techniques have become essential in RL to protect users' data \citep{lei2023_NewChallengesReinforcement}.

A popular notion of privacy is
differential privacy (DP) \citep{dwork2006calibrating}, which has been widely adopted across a wide range of applications since its introduction. The core objective of DP is to make the output of an algorithm statistically indistinguishable when any single individual’s data is changed, which addresses the challenge of learning useful information about a population while learning nothing about an individual \citep{dwork2014algorithmic}. DP has become increasingly popular nowadays in designing private sequential decision-making algorithms (\citep{jain2012differentially}, \citep{Mishra2015MAB}, \citep{tossou2016algorithms}). However, it has been proven that for contextual bandits under standard DP constraint, it is impossible to get sublinear regret bound  \citep{shariff2018differentially}. In RL, it is also impractical to try to recommend actions to users while protecting their information. Thus, some relaxed variants of standard DP should be considered under RL settings. Joint Differential Privacy (JDP) \citep{kearns2014mechanism}, which is weaker than standard DP but still provides strong privacy protection, ensures that even if an adversary can observe the recommended actions for all users except a given user, it is still statistically difficult to accurately identify the information from that user. Local Differential Privacy (LDP) \citep{duchi2013local} requires that each user’s raw data, which is encoded in the state-action-reward trajectory in RL setting, is privatized before being sent to the agent. LDP provides the privacy protection at the user level and ensures that even if the agent is unreliable, the privacy protection is still valid. Both notions have been well studied in bandits (\citep{shariff2018differentially}, \citep{basu2019differential}, \citep{zheng2020locally}) and draw more and more attention in RL (\citep{vietri2020private}, \citep{garcelon2021local}, \citep{Sayak2021}, \citep{pmlr-v206-qiao23a}). A general framework for designing optimistic private RL algorithms for value iteration was proposed in \citep{Sayak2021}, instantiated with suitable privatized counts to yield Private-UCB-VI, which satisfies both JDP and LDP. That framework was later improved in \citep{pmlr-v206-qiao23a}, yielding DP-UCBVI, another UCB-VI based algorithm that achieves the best known regret bound under JDP and LDP constraints to the best of our knowledge.

Despite these substantial advances, the landscape of privacy-preserving RL algorithms is still far from complete. Apart from the aforementioned variants of DP notions, some advanced DP variants, such as R\'enyi Differential Privacy (RDP) \citep{mironov2017renyi}, have not been introduced in RL settings so far. RDP provides a similar privacy guarantee and shares many important properties with standard DP while allowing for tighter analysis of composite mechanisms. Besides, existing studies focus mainly on value-based methods (such as Private-UCB-VI, DP-UCBVI), while little attention has been paid to algorithms based on randomized exploration and posterior sampling, such as Randomized Least Squares Value Iteration (RLSVI) \citep{Ian2019DeepExplorationviaRandomizedValueFunctions}. Compared to classic UCB-type methods, which usually require the design of a complex bonus term to realize optimism and are difficult to apply to many practical models such as neural network, algorithms based on randomized exploration have been more widely used in practice (\citep{chapelle2011empirical}, \citep{burda2018exploration}, \citep{osband2018RandomizedPriorFunctionsForDeepRL}). However, to the best of our knowledge, no differentially private version of RL algorithms based on randomized exploration has been proposed so far. In addition, there is no existing work on how the random noise in exploration-based RL algorithms influences privacy protection. This leaves a significant gap in understanding how randomized exploration interacts with the injected noise required by privacy mechanisms. 

In this work, we provide a privacy analysis that characterizes how the noise in RLSVI set for exploration serves a dual purpose by also providing privacy protection. More concretely, the randomization that RLSVI already injects for exploration purposes is sufficient to guarantee privacy, so no additional noise is needed. We formally prove that RLSVI is $(\varepsilon(\delta),\delta)$-joint differentially private in tabular MDP as is with $\varepsilon(\delta) = \frac{2AK}{H^2\log(2HSA)} + 2\sqrt{\frac{2AK\log(1/\delta)}{H^2\log(2HSA)}}$. 
\subsection{Related work}
For the family of randomized exploration methods in RL, RLSVI was first proposed in \citep{Ian2019DeepExplorationviaRandomizedValueFunctions} by injecting carefully chosen random noise into the value function to encourage exploration, with an expected regret guarantee provided. Subsequent works tightened the theory: Worst-case bounds for RLSVI were given in \citep{russo2019worstcaseregretboundsexploration}. Frequentist regret bounds for RLSVI were given in \citep{Zanette2020FrequentistRegretBoundsforRLSVI}. Worst-case bounds for RLSVI-type algorithms under tabular MDP were improved in \citep{agrawal2021improvedboundRLSVI}. Single Seed Randomization (SSR) was proposed in \citep{xiong2022SingleSeedRandomization} by applying certain clipping strategy to the value function and achieved the near-optimal minimax regret which also matches the best result in UCB-type algorithms up to logarithm terms.

Differential privacy issues have been studied widely under stochastic multi-arm bandit setting (\citep{Mishra2015MAB}, \citep{sajed2019mab}, \citep{azize2022bandits}, \citep{hu2022near}, \citep{tossou2016algorithms}). In addition, \citep{azize2023renyi} first introduced RDP \citep{mironov2017renyi} into finite-armed bandits, linear bandits and linear contextual bandits. They also designed corresponding algorithms satisfying RDP with valid regret bounds. In particular, \citep{ou2024_ThompsonSamplingItselfa} proved that the Thompson sampling algorithm with Gaussian prior is differentially private as is, which directly motivated our investigation of whether the same holds for RLSVI.

The following series of works focus on designing privacy-preserving algorithms under the episodic tabular MDP setting and all proved a regret guarantee for their algorithms under different privacy notions. \citep{vietri2020private} first introduced JDP \citep{kearns2014mechanism} into the
tabular MDP and proposed PUCB with JDP guarantee. In addition, they also presented lower bounds on sample complexity and regret bound under JDP constraints. 
\citep{garcelon2021local} introduced LDP \citep{duchi2013local} into the tabular MDP and they derived LDP-OBI with LDP guarantee. They also established a lower bound for regret minimization in finite-horizon MDPs under LDP constraints, which shows that a multiplicative effect of privacy cost on regret is unavoidable. After that, \citep{Sayak2021} proposed two general frameworks for designing optimistic private RL algorithms, one for policy optimization and another for value iteration. They instantiate these frameworks with suitable privatized counts and proposed Private-UCB-PO and Private-UCB-VI, which satisfy both JDP and LDP. By improving that framework, \citep{pmlr-v206-qiao23a} proposed a novel privatization of visitation numbers that satisfies several nice properties and designed another UCB-VI based algorithm: DP-UCBVI, which achieves the best regret bound under JDP and LDP constraints to the best of our knowledge. Furthermore, the "non-private" part of their regret bound matches the minimax lower bound in \citep{jin2018q}.

By comparing JDP and LDP, it is not hard to find that they provide privacy protection at different steps during the user-agent interaction. In JDP, privacy protection happens after users send their data to the (central) agent and thus the agent has access to all users’ raw data. The JDP guarantee remains meaningful only when we can trust the agent. In LDP, the protection occurs directly on the users' side but the fact that the agent is only given the perturbed data induces a higher privacy cost.  \citep{bai2026_NearOptimalReinforcementLearning} first introduced Shuffle Differential Privacy (SDP) \citep{cheu2019shuffleDP} into RL setting, which is an intermediate trust model stronger than the central DP model but with a lower privacy cost than the local model, making a good attempt to balance privacy protection and sample efficiency. They also proposed SDP-PE, the first RL algorithm under SDP with a valid regret guarantee and achieved the near-optimal bound.
Note that all of the aforementioned works focused on the single-agent RL setting. \citep{qiao2024differentially} first extended the definitions of JDP and LDP to
the multi-agent RL setting \citep{zhang2021multi}, where several agents simultaneously make decisions in an unfamiliar environment with the goal of maximizing their individual cumulative rewards. They proposed a general algorithm for DP multi-agent RL: DP-Nash-VI based on Nash-VI \citep{liu2021sharp}. And the regret bounds they derived under both JDP and LDP strictly generalize the best known results for single-agent RL with corresponding DP notions in \citep{pmlr-v206-qiao23a}. 

\section{Models and preliminaries}
We first introduce some notation. For $N\in\mathbb{Z}^+$, we define $\lrs{N}:=\lrc{1,2,\dots,N}$. $\lri{\cdot,\cdot}$ denotes the inner product. $\1(\cdot)$ denotes the indicator function. $\mathcal{N}(\mu,\sigma^2)$ denotes Gaussian distribution with mean $\mu$ and variance $\sigma^2$. For any set $V$, we use $\Delta(V)$ to denote the set of all possible probability distributions over $V$. 
\subsection{Definition and interaction protocol}
We consider finite-horizon \textit{Markov Decision Processes} (MDP) with non-stationary transitions, denoted by a tuple
\begin{equation*}
    M=\lr{\Ss,\A,H,\lrc{P_h}_{h=1}^H,\lrc{\mathcal{R}_h}_{h=1}^H,s_1}.
\end{equation*}
Here 
$\Ss$ and $\A$ are state and action spaces with $\lra{\Ss}=S$, $\lra{\A}=A$.
$H\in\N$ is a given finite-horizon.
$P_h:\Ss\times\A\to\Delta(\Ss)$ is the non-stationary transition kernel with $P_h(s'|s,a)$ representing the probability of transition from state $s$, action $a$ to next state $s'$ at time step $h$. It holds that $\sum_{s'\in\Ss}P_h(s'|s,a)=1$ for any $(h,s,a)$.
$\mathcal{R}_h:\Ss\times\A\to\Delta([0,1])$ denotes the corresponding distribution of reward. At state $s$, when taking action $a$, the reward obtained follows $\mathcal{R}_h(s,a)$. We use $r_h(s,a)\sim\mathcal{R}_h(s,a)$ to denote the random variable that follows the corresponding reward distribution. 
$s_1\in\Ss$ denotes a deterministic initial state.

A policy can be seen as a series of mappings $\pi=\lr{\pi_1,\dots,\pi_H}$, which prescribes what action to choose at various states and time steps. Based on the possible outputs, a policy could be \textit{deterministic}, where each $\pi_h:\Ss\to\A$ maps each state $s\in\Ss$ to an action directly. Or it could also be \textit{randomized}, where each $\pi_h$ maps each state $s\in\Ss$ to a probability distribution over actions, \textit{i.e.} $\pi_h:\Ss\to\Delta(\A),\ \forall h\in[H]$. In addition, we use $\Pi$ to denote the space of all possible policies.

\subsection{Value functions and optimality}

In this subsection, we introduce two useful concepts: state value function (also called state value or value function) and Q-value function (also called action-value function or Q-value). Given a policy $\pi$ and any $h\in[H]$, the value function $V_h^\pi(\cdot)$ and Q-value function $Q_h^\pi(\cdot,\cdot)$ associated with policy $\pi$ in the sub-episode consisting of periods $\lrc{h,\dots,H}$ are defined as: 
\begin{align*}
    V_h^\pi(s):=&\E_\pi\lrs{\sum_{t=h}^Hr_t\,\middle|\,  s_h=s},\ \forall s\in\Ss, \\Q_h^\pi(s,a):=&\E_\pi\lrs{\sum_{t=h}^Hr_t\,\middle|\,s_h=s,a_h=a},\ \forall s,a\in(\Ss,\A).
\end{align*} 
The state value $V_h^\pi(s)$ is the expected cumulative reward the agent receives from time $h$ to $H$ following policy $\pi$, starting from state $s$. It is related to both the MDP $M$ and the policy $\pi$. By definition, $V_h^\pi(s)=Q_h^\pi(s,\pi(s))$. In addition, as we assumed the reward $r_h$ is bounded in $[0,1]$, it holds trivially that for any $\pi$ and any $s,a$,
\begin{align*}
    0\leq V_h^\pi(s)\leq H-h+1 \\
    0\leq Q_h^\pi(s,a)\leq H-h+1
\end{align*}

In finite-horizon tabular MDPs, an optimal policy $\pi^\star$ always exists which maximizes $V_h^\pi(s)$ for all $s,h\in\Ss\times[H]$ simultaneously and we denote the value function and Q-value function with respect to $\pi^\star$, also known as the \textit{optimal value function} and the \textit{optimal Q-value function} respectively, by $V_h^\star(\cdot)$ and $Q_h^\star(\cdot,\cdot)$. We also introduce a vector representation: $V_h^\pi:=\lr{V_h^\pi(s_1),\dots,V_h^\pi(s_S)}\in\R^S$. To simplify many expressions, we set $V_{H+1}^\pi\equiv0\in\R^S,\ \forall \pi\in\Pi$.
Then \textit{Bellman equation} follows $\forall h,s,a\in[H]\times\Ss\times\A$,
\begin{align*}
    Q_h^\pi(s,a)=&\E[r_h(s,a)]+\E_{x\sim P_h(\cdot|s,a)}[V_{h+1}^\pi(x)] \\
    =&R_h(s,a)+\lri{P_h(\cdot|s,a),V_{h+1}^\pi},\\ V_h^\pi(s)=&\E_{a\sim\pi_h}[Q_h^\pi(s,a)].
\end{align*}


\subsection{Episodic reinforcement learning}


In (online) episodic RL, an agent interacts
with an environment with unknown transition kernel and reward distribution across episodes of fixed length $H$, and the initial state of the agent is reset whenever a new episode begins. More concretely, let $K$ be the total number of episodes that the agent plays and the number of steps is $T:=KH$. The $K$ trajectories generated during the interaction take the form:
\begin{align*}
    (s_1^1,a_1^1,r_1^1,\dots&,s_H^1,a_H^1,r_H^1,s_{H+1}^1) \\
    (s_1^2,a_1^2,r_1^2,\dots&,s_H^2,a_H^2,r_H^2,s_{H+1}^2)\\
    &\cdots \\
    (s_1^K,a_1^K,r_1^K,\dots&,s_H^K,a_H^K,r_H^K,s_{H+1}^K)
\end{align*}
They are generated by the following rule: 
\begin{equation*}
    a_h^k\sim\pi_h^k(\cdot|s_h),\ r_h^k\sim\mathcal{R}_h(s_h,a_h),\ s_{h+1}^k\sim P_h(\cdot|s_h,a_h),\ \forall h\in[H],\ \forall k\in[K].
\end{equation*}
For convenience, we assume that the initial state is fixed for every episode, but similar conclusions could also be drawn assuming that the initial state is drawn from some distribution over $\Ss$. We also define the history $\HH_k$ as the collection of the first $k$ trajectories. 
We measure the performance of an online reinforcement learning algorithm by its (expected) regret which is defined as
\begin{equation*}
    \Regret(K):=\sum_{k=1}^K\left[V_1^\star(s_1^k)-V_1^{\pi^k}(s_1^k)\right],
\end{equation*}
where $s_1^k$ is the initial state and $\pi^k=\left(\pi_1^k,\dots,\pi_H^k\right)$ is the policy executed at episode $k$, $\forall k\in[K]$. 


\subsection{Differential privacy under episodic RL}
In this section, we discuss the basic concepts of differential privacy in an episodic RL setting and the basic mechanisms to achieve DP and its variants. We first reconsider the RL protocol from a different perspective: from agent-environment interaction to user-agent interaction. Specifically, during the $h$-th step of the $k$-th episode, user $u_k$ sends its state $s_h^k$ to the agent $\M$, at which point $\M$ sends back an action $a_h^k$, and finally $u_k$ sends its reward $r_h^k$ to $\M$. Formally, we denote the sequence of $K$ users who participate in the above RL protocol by $U_K=(u_1,\dots,u_K)\in\U^K$, where $\mathcal{U}$ is the space of all such possible users. We let 
\begin{equation*}
    \M(U_K)=(a_1^1,\dots,a_H^1,a_1^2,\dots ,a_H^2,\dots,a_1^K,\dots,a_H^K)\in\A^{KH}
\end{equation*}
denote the whole sequence of actions chosen by agent $\M$. 


\begin{definition}[Differential Privacy (DP) \citep{dwork2006calibrating}]
    For any $\varepsilon>0$ and $\delta\in[0,1)$, a mechanism $\M:\U^K\to\A^{KH}$ is $(\varepsilon,\delta)$-differentially private if for any possible user sequences $U_K$ and $U'_K$ differing on a single user and any subset $E$ of $\A^{KH}$,
    \begin{equation*}
        \P[\M(U_K)\in E]\leq e^\varepsilon\P\lrs{\M(U'_K)\in E}+\delta
    \end{equation*}
    If $\delta=0$, we say that $\M$ is $\varepsilon$-differentially private ($\varepsilon$-DP).
\end{definition}
\begin{remark}
\label{Neighboring choice}
    In this paper, we assume the states and actions are the same for two different users, which implies that $U_K$ and $U_K'$ above could differ on all or some of the rewards collected in one specific trajectory. While this is a strong assumption and it may not hold in all practical scenarios, it allows us to focus on the core technical challenges. Relaxing it is left for future work.
\end{remark}


\begin{definition}[R\'{e}nyi Differential Privacy (RDP) \citep{mironov2017renyi}]
    For any $\varepsilon>0$ and $\alpha>1$, a mechanism $\M:\U^K\to\A^{KH}$ is ($\alpha,\varepsilon$)-RDP if for any possible user sequences $U_K$ and $U'_K$ differing on a single user, it holds that
    \begin{equation*}
        D_\alpha\left(P_{\M(U_K)}\,\middle\|\,P_{\M(U_K')}\right)\leq\varepsilon
    \end{equation*}
    where $P_{\M(U_K)}$ and $P_{\M(U_K')}$ are the probability distributions of $\M(U_K)$ and $\M(U_K')$.
    \begin{equation*}
        D_\alpha(P\,\|\,Q):=\frac{1}{\alpha-1}\log\E_{x\sim Q}\left(\frac{P(x)}{Q(x)}\right)^\alpha
    \end{equation*}
    is the R\'enyi divergence of order $\alpha$ for $P$ and $Q$.
\end{definition}
Here, $\alpha$ is a parameter that controls how the difference between $P$ and $Q$ is measured. As $\alpha$ increases, the divergence becomes more sensitive to larger differences in the ratios $\frac{P(x)}{Q(x)}$. And a smaller $\alpha$ provides a more balanced consideration of the entire distribution. 
For the rest of this paper, we omit the notation of probability distributions in R\'{e}nyi divergence and only write the notation of mechanisms, \textit{i.e.} $D_\alpha\left(\M(U_K)\,\|\,\M(U_K')\right):=D_\alpha\left(P_{\M(U_K)}\,\middle\|\,P_{\M(U_K')}\right)$.

Both DP and RDP require that the mechanism take joint outputs of all $K$ users into consideration at the same time, which could be impractical in episodic RL setting. Let us take a DP mechanism $\M$ for example. For user sequences $U_K$ and $U_k'$ differing on the $k$-th user for some $k\in[K]$, the two action sequences suggested for the two different users should be indistinguishable as part of $\M(U_K)$ and $\M(U_K')$. However, this also implies an inefficient mechanism because it could not react to any user's personal features, which motivates some other variants of DP to be introduced.

The goal of Joint Differential Privacy (JDP) is to protect the privacy of individual users in scenarios where the mechanism generates joint outputs that involve multiple users. Unlike DP or RDP, JDP only ensures that changes to one user's data do not significantly affect the joint output provided to other users.
\begin{definition}[Joint Differential Privacy (JDP) \citep{kearns2014mechanism}]
    For any $\varepsilon>0$, a mechanism $\M:\U^K\to\A^{KH}$ is $(\varepsilon,\delta)$-joint differentially private if for all $k\in[K]$, for all user sequences $U_K,U'_K$ differing only on the $k$-th user and for all sets of actions $\A_{-k}\subset\A^{(K-1)H}$ given to all but the $k$-th user,
    \begin{equation*}
        \P\left(\M_{-k}(U_K)\in\A_{-k}\right)\leq e^\varepsilon\P[\M_{-k}(U'_K)\in\A_{-k}]+\delta
    \end{equation*}
    where $\M_{-k}$ denotes the mechanism that excludes the $k$-th user's input and computes joint outputs for all other users, \textit{i.e.} $\M_{-k}(U_K):=\M(U_K)\setminus\lrc{a_h^k}_{h=1}^H$.
\end{definition}
For the interaction protocol we mentioned above, the users need to send their raw data (i.e. states and rewards) to the agent. Therefore, a JDP mechanism could only provide privacy protection under the assumption that the agent is trusted.

\begin{definition}[$\ell_1$-sensitivity]
    The $\ell_1$-sensitivity of a query function $f:\mathcal{D}\to\R$ is 
    \begin{equation*}
        \Delta_1 f=\max_{D,D'}\lrv{f(D)-f(D')}
    \end{equation*}
    where $\mathcal D$ denotes the space of all possible datasets. The maximum is taken over all \textit{neighboring} datasets $D,D'$, which means they differ on exactly one element. 
\end{definition}
\begin{lemma}[Gaussian mechanism preserves $(\alpha,\varepsilon)$-RDP]
    \label{lemma: parameters for Gaussian mech to realize RDP}
     Let $f:\D \to \R$ be a query function with $\ell_1$ sensitivity of $\Delta$. The  Gaussian mechanism $G_\sigma$ based on $f$ defined as
     \begin{equation*}
         G_\sigma f(D):=f(D)+\mathcal{N}(0,\sigma^2)
     \end{equation*}
     satisfies
    \begin{equation*}
        D_\alpha(G_\sigma(D)\,\|\,G_\sigma(D'))\leq\frac{\alpha\Delta^2}{2\sigma^2}
    \end{equation*}
    Specifically, $G_{\frac{\alpha\Delta^2}{2\varepsilon}}f$ satisfies $(\alpha,\varepsilon)$-RDP.
\end{lemma}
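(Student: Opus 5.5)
The statement to prove is Lemma~\ref{lemma: parameters for Gaussian mech to realize RDP}. The plan is to compute the R\'enyi divergence between two Gaussians with equal variance and different means in closed form, and then bound the mean gap by the sensitivity. For neighboring $D,D'$, the outputs are $P=\mathcal N(\mu,\sigma^2)$ and $Q=\mathcal N(\mu',\sigma^2)$ with $\mu=f(D)$, $\mu'=f(D')$, and $|\mu-\mu'|\le\Delta$ by definition of $\ell_1$-sensitivity (for a scalar query this is just the absolute difference). So it suffices to show $D_\alpha(P\,\|\,Q)=\frac{\alpha(\mu-\mu')^2}{2\sigma^2}$ and then take the maximum over neighboring pairs.

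To compute the divergence, write the density ratio explicitly:
\begin{equation*}
\frac{P(x)}{Q(x)}=\exp\!\left(\frac{(x-\mu')^2-(x-\mu)^2}{2\sigma^2}\right).
\end{equation*}
Raising it to the power $\alpha$ and multiplying by $Q(x)$ gives a single Gaussian-type exponent. Completing the square in $x$, the quadratic part becomes $-(x-\mu_\alpha)^2/(2\sigma^2)$ with $\mu_\alpha=\alpha\mu+(1-\alpha)\mu'$. Hence the integral is a normalized Gaussian integral times the constant
\begin{equation*}
\exp\!\left(\frac{\alpha(\alpha-1)(\mu-\mu')^2}{2\sigma^2}\right).
\end{equation*}
Taking the logarithm and dividing by $\alpha-1$ yields $\frac{\alpha(\mu-\mu')^2}{2\sigma^2}\le\frac{\alpha\Delta^2}{2\sigma^2}$, which is the claimed bound. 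The only real work is the completing-the-square bookkeeping. I would check that the leftover constant is $\frac{\alpha\mu^2+(1-\alpha)\mu'^2-\mu_\alpha^2}{2\sigma^2}$ and simplifies to $\frac{\alpha(\alpha-1)(\mu-\mu')^2}{2\sigma^2}$; this is the step where sign errors are likely.

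For the ``specifically'' clause, set the right-hand side equal to $\varepsilon$, i.e.\ choose $\sigma^2=\frac{\alpha\Delta^2}{2\varepsilon}$. The subscript in $G_{\frac{\alpha\Delta^2}{2\varepsilon}}$ should therefore be read as the variance, not the standard deviation. With this choice, $D_\alpha\le\varepsilon$ holds for every neighboring pair, which is exactly $(\alpha,\varepsilon)$-RDP.
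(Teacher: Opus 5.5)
Your proposal is correct and follows the paper's route. The paper cites the closed form $D_\alpha(\mathcal N(\mu,\sigma^2)\,\|\,\mathcal N(\mu',\sigma^2))=\frac{\alpha(\mu-\mu')^2}{2\sigma^2}$ from van Erven and Harremo\"es, whereas you derive it by completing the square; both then bound $|\mu-\mu'|\le\Delta$. Reading the subscript in $G_{\frac{\alpha\Delta^2}{2\varepsilon}}$ as a variance is the right call: under the paper's literal convention ($\sigma$ a standard deviation) the ``specifically'' clause would not in general give $(\alpha,\varepsilon)$-RDP, and the paper's proof never addresses that clause. One sign slip in the check you planned: $\frac{\alpha\mu^2+(1-\alpha)\mu'^2-\mu_\alpha^2}{2\sigma^2}$ equals $\frac{\alpha(1-\alpha)(\mu-\mu')^2}{2\sigma^2}$, not $\frac{\alpha(\alpha-1)(\mu-\mu')^2}{2\sigma^2}$. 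It enters the exponent with a minus sign, which is what produces your correct factor $\exp\big(\frac{\alpha(\alpha-1)(\mu-\mu')^2}{2\sigma^2}\big)$.
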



The following lemma shows that RDP could be preserved under adaptive sequential composition.
\begin{lemma}[Composition for RDP mechanism, \citep{mironov2017renyi}]
\label{Composition for RDP mechanism}
    Let $f:\mathcal D\to \mathcal R_1$ be ${(\alpha,\varepsilon)}$-RDP and $g:\mathcal R_1\times \mathcal D\to\mathcal R_2$ be $(\alpha,\varepsilon_2)$-RDP for any $x\in\mathcal{R}_1$. Then the mechanism defined as $(X,Y)$, where $X\leftarrow f(D)$ and $Y\leftarrow g(X,D)$, satisfies $(\alpha,\varepsilon_1+\varepsilon_2)$-RDP.
\end{lemma}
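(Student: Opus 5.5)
The plan is to reduce adaptive composition of RDP mechanisms to the one-step guarantee. I would prove the statement by induction on the number of composed mechanisms, since the two-mechanism case is the whole content. For a mechanism producing $(X,Y)$ with $X\leftarrow f(D)$ and $Y\leftarrow g(X,D)$, the joint densities factor as
\begin{equation*}
p(x,y)=p_f(x)\,p_{g}(y\mid x)
\end{equation*}
under $D$, and analogously as $q(x,y)=q_f(x)\,q_g(y\mid x)$ under the neighbor $D'$. The key point is that the conditional law of $Y$ given $X=x$ is exactly the law of the mechanism $g(x,\cdot)$. By hypothesis, that mechanism is $(\alpha,\varepsilon_2)$-RDP uniformly in $x$.

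Next I would write out the exponentiated divergence,
\begin{equation*}
e^{(\alpha-1)D_\alpha(P\|Q)}=\int q_f(x)\left(\frac{p_f(x)}{q_f(x)}\right)^{\alpha}\left[\int q_g(y\mid x)\left(\frac{p_g(y\mid x)}{q_g(y\mid x)}\right)^{\alpha}dy\right]dx .
\end{equation*}
The inner bracket equals $e^{(\alpha-1)D_\alpha(g(x,D)\|g(x,D'))}$. This is at most $e^{(\alpha-1)\varepsilon_2}$ for every $x$, by the uniform RDP hypothesis on $g$. Pulling this constant out leaves $e^{(\alpha-1)D_\alpha(f(D)\|f(D'))}\le e^{(\alpha-1)\varepsilon_1}$. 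Taking logarithms and dividing by $\alpha-1>0$ gives the bound $\varepsilon_1+\varepsilon_2$.

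The main obstacle is rigor rather than the idea. First, densities need not exist, so I would work with Radon--Nikodym derivatives with respect to a common dominating measure. Second, measurability of $x\mapsto$ the inner integral must be justified, which is where a regular conditional distribution, or a Markov kernel for $g$, is needed. I would also handle the case $P\not\ll Q$, where the divergence is infinite and the hypotheses force the corresponding pieces to vanish.

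Finally, for the application to RLSVI I would iterate this over the $KH$ adaptive steps, or over the $K$ episodes. Each step is a Gaussian mechanism, so Lemma~\ref{lemma: parameters for Gaussian mech to realize RDP} bounds its per-step divergence. Conditioning on the past outputs plays the role of $x$ at each step.
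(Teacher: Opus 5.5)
Your argument is correct. It is essentially the standard proof from Mironov (2017), which the paper cites rather than reproves. Both factor the joint density as $p_f(x)\,p_g(y\mid x)$, bound the inner conditional integral by $e^{(\alpha-1)\varepsilon_2}$ uniformly in $x$, and recognize what remains as $e^{(\alpha-1)D_\alpha(f(D)\|f(D'))}\le e^{(\alpha-1)\varepsilon_1}$.

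The induction framing is unnecessary for this two-mechanism statement; it is only needed later, when you iterate over the $KH$ steps. Also, the $\varepsilon$ in the statement's hypothesis on $f$ should read $\varepsilon_1$.
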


The next lemma shows that RDP could be converted to $(\varepsilon,\delta)$-DP.
\begin{lemma}[From RDP to DP, \citep{mironov2017renyi}]
\label{From RDP to DP}
    If $f$ is an $(\alpha,\varepsilon)$-RDP mechanism, it also satisfies $(\varepsilon+\frac{\log (1/\delta)}{\alpha-1},\delta)$-DP for any $\delta\in(0,1)$.
\end{lemma}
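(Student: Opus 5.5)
The plan is the standard Hölder-inequality argument of \citep{mironov2017renyi}. Fix neighboring inputs $D,D'$ and write $P$ and $Q$ for the output distributions of $f(D)$ and $f(D')$. Let $L=P/Q$ be the likelihood ratio; we may assume $P\ll Q$, since otherwise $D_\alpha(P\,\|\,Q)=\infty$ and the hypothesis fails. By the $(\alpha,\varepsilon)$-RDP hypothesis,
\begin{equation*}
    \E_{x\sim Q}\lrs{L(x)^\alpha}=\exp\lr{(\alpha-1)D_\alpha(P\,\|\,Q)}\leq e^{(\alpha-1)\varepsilon}.
\end{equation*}
For an arbitrary measurable event $E$ we write $P(E)=\E_{x\sim Q}\lrs{L(x)\1[x\in E]}$. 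We then apply Hölder's inequality with conjugate exponents $\alpha$ and $\frac{\alpha}{\alpha-1}$, which gives
\begin{equation*}
    P(E)\leq \lr{\E_{Q}\lrs{L^\alpha}}^{1/\alpha}Q(E)^{\frac{\alpha-1}{\alpha}}\leq \lr{e^{\varepsilon}Q(E)}^{\frac{\alpha-1}{\alpha}}.
\end{equation*}

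The next step converts this into the additive form by splitting into two cases according to the size of $e^\varepsilon Q(E)$ relative to $\delta^{\alpha/(\alpha-1)}$.

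\emph{Case 1:} $e^\varepsilon Q(E)\leq \delta^{\alpha/(\alpha-1)}$. Then the bound above gives $P(E)\leq \delta$ directly.

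\emph{Case 2:} $e^\varepsilon Q(E)> \delta^{\alpha/(\alpha-1)}$. We write $\lr{e^\varepsilon Q(E)}^{\frac{\alpha-1}{\alpha}}=e^\varepsilon Q(E)\cdot\lr{e^\varepsilon Q(E)}^{-1/\alpha}$. The second factor is less than $\delta^{-1/(\alpha-1)}=\exp\lr{\frac{\log(1/\delta)}{\alpha-1}}$, so
\begin{equation*}
    P(E)\leq \exp\lr{\varepsilon+\frac{\log(1/\delta)}{\alpha-1}}Q(E).
\end{equation*}

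In both cases $P(E)\leq e^{\varepsilon+\log(1/\delta)/(\alpha-1)}Q(E)+\delta$. Since $E$ and the neighboring pair were arbitrary, this is exactly the claimed $(\varepsilon+\frac{\log(1/\delta)}{\alpha-1},\delta)$-DP guarantee.

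The only genuinely non-routine step is spotting the Hölder bound $P(E)\leq (e^\varepsilon Q(E))^{(\alpha-1)/\alpha}$ together with the right threshold $\delta^{\alpha/(\alpha-1)}$ for the case split. Once the threshold is chosen this way, the additive slack absorbs the small-probability events and the multiplicative factor $\delta^{-1/(\alpha-1)}$ handles the rest. The measure-theoretic details (absolute continuity, and working with densities with respect to a common dominating measure when the output space $\A^{KH}$ is discrete) are routine. Because the statement is phrased for general mechanisms, the argument applies verbatim to the JDP setting later by taking $P,Q$ to be the laws of $\M_{-k}(U_K)$ and $\M_{-k}(U_K')$.
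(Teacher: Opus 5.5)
Your proof is correct and follows the standard argument in \citep{mironov2017renyi}, which the paper cites without reproducing: the H\"older bound $P(E)\leq \lr{e^{\varepsilon}Q(E)}^{(\alpha-1)/\alpha}$ (Mironov's probability-preservation property), followed by the same case split at $e^{\varepsilon}Q(E)=\delta^{\alpha/(\alpha-1)}$. One small point: your reduction to $P\ll Q$ relies on the standard convention that $D_\alpha(P\,\|\,Q)=\infty$ for $\alpha>1$ when $P\not\ll Q$, which the paper's $\E_{x\sim Q}$ form of the definition leaves implicit.
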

In the end, we introduce an important lemma for designing a JDP algorithm.
\begin{lemma}[Billboard Lemma, Lemma 9 from      \citep{hsu2014_PrivateMatchingsAllocations}]
    \label{Billboard Lemma}
    Let $\M':\U^K\to\YY$ be a mechanism satisfying $(\varepsilon,\delta)$-DP. Consider any set of functions $f_i:\U\times \YY\to \A^{H}$ that are independent of the users' data, then the composition $\lrc{f_i\left(\Pi_iU^K,\M'(U^K)\right)}$ is $(\varepsilon,\delta)$-JDP, where $\Pi_i:\U^K\to\U$ is the projection from the entire user sequence to the $i$-th user. 
\end{lemma}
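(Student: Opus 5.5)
The plan is to reduce joint differential privacy to ordinary differential privacy of $\M'$ combined with the post-processing property of DP. Fix an index $k\in[K]$ and two user sequences $U_K,U'_K\in\U^K$ that differ only on the $k$-th user. The key observation is that for every $i\neq k$ the projections agree, $\Pi_iU_K=\Pi_iU'_K=:u_i$. Hence the part of the joint output shown to all users except $k$ is
\begin{equation*}
    \M_{-k}(U_K)=\lr{f_i\lr{u_i,\M'(U_K)}}_{i\neq k}=g\lr{\M'(U_K)},\qquad \M_{-k}(U'_K)=g\lr{\M'(U'_K)}.
\end{equation*}
Here $g:\YY\to\A^{(K-1)H}$ is defined by $g(y):=\lr{f_i(u_i,y)}_{i\neq k}$. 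Crucially, \emph{the same} map $g$ appears in both expressions. This holds because $g$ depends only on the data of users $i\neq k$, which is common to $U_K$ and $U'_K$, and on the functions $f_i$, which by assumption do not depend on the users' data.

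First I treat the case where each $f_i$ is deterministic. For any measurable set $\A_{-k}\subset\A^{(K-1)H}$, let $E:=g^{-1}(\A_{-k})\subset\YY$. Then
\begin{equation*}
    \P\lr{\M_{-k}(U_K)\in\A_{-k}}=\P\lr{\M'(U_K)\in E}\leq e^\varepsilon\P\lr{\M'(U'_K)\in E}+\delta=e^\varepsilon\P\lr{\M_{-k}(U'_K)\in\A_{-k}}+\delta .
\end{equation*}
The inequality is exactly the $(\varepsilon,\delta)$-DP guarantee of $\M'$, applied to the neighboring sequences $U_K,U'_K$ and the event $E$. This is the $(\varepsilon,\delta)$-JDP inequality for index $k$. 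Since $k$ was arbitrary, the statement follows.

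If the $f_i$ are allowed internal randomness $\xi$ that is independent of the data and of $\M'$, I would condition on $\xi$. For each fixed $\xi$ the map $g_\xi$ is deterministic, so the bound above holds pointwise in $\xi$. Taking expectation over $\xi$ preserves the inequality, because both the factor $e^\varepsilon$ and the additive $\delta$ are constants.

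There is no real obstacle. The only point requiring care is the identification step: one must check that $g$ is identical under $U_K$ and $U'_K$. This is precisely where the exclusion of user $k$'s own actions $\lrc{a_h^k}_{h=1}^H$ is used. It is also where we need that $f_i$ reads only $\Pi_iU^K$ and not the whole sequence. A secondary formality is measurability of $g^{-1}(\A_{-k})$, which is immediate in our setting because $\A^{(K-1)H}$ is finite.
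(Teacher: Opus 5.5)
Your proof is correct and is the standard argument. The paper does not prove this lemma itself; it imports it from Hsu et al.\ (Lemma 9), and that proof is your post-processing argument: use $\Pi_iU_K=\Pi_iU'_K$ for $i\neq k$, pull $\A_{-k}$ back to $E=g^{-1}(\A_{-k})\subset\YY$, and apply the $(\varepsilon,\delta)$-DP guarantee of $\M'$. Your extension to randomized $f_i$, conditioning on independent internal randomness, is also correct.
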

Here $\M'$ could be viewed as a mechanism to produce DP statistics used for the algorithm, while $\lrc{f_i\left(\Pi_iU^K,\M'(U^K)\right)}$ could be viewed as the algorithm. 
The Billboard lemma shows that if the output of one algorithm is a composition of several individual users' data and the output generated by a $(\varepsilon,\delta)$-DP mechanism, then the algorithm satisfies $(\varepsilon,\delta)$-JDP.

\section{RLSVI is joint differentially private}
In this section, we show that RLSVI for tabular MDP (Algorithm \ref{Algorithm: RLSVI}) is a JDP algorithm. 
To prove this formally, we only need to show that the release of policy $\lrc{\pi^k}_{k=1}^K$ satisfies the constraint of $\lr{\varepsilon,\delta}$-DP due to the Lemma \ref{Billboard Lemma}. Informally, $\lrc{\pi^k}_{k=1}^K$ is a post-processing of the release of $\lrc{\tilde{Q}_h^k(s,a)}_{k,h,s,a}$, where a Gaussian noise is added to every single $\tilde{Q}_h^k(s,a)$. Here, the difficulty lies in the fact that $\tilde{Q}_h^k(s,a)$ is not independent of each other and the noise added to $\tilde{Q}_h^k(s,a)$ could influence all the $\tilde{Q}_t^k(s,a)$, $\forall t\in[h-1]$. We will first rephrase RLSVI in the language of RDP mechanisms and then prove that for a mechanism at a single step, it satisfies RDP constraint (Lemma \ref{Single step privacy guarantee}). After that we will show the privacy guarantee maintains after the composition (Lemma \ref{pi^k is RDP}) and finally the algorithm is JDP (Theorem \ref{Theorem:RLSVI is JDP}).

\paragraph{Notation for empirical estimates} We use the notation of counts as 
\begin{align*}
    N_h^k(s,a)=&\sum_{i=1}^{k-1}\1 \lrc{s_h^{i}=s, a_h^{i}=a} \\
    N_h^k(s,a,s')=&\sum_{i=1}^{k-1}\1\lrc{s_h^{i}=s, a_h^{i}=a,s_{h+1}^{i}=s'} \\
\end{align*}
And in language of reward, we use
\begin{equation*}
    R_h^k(s,a):=\sum_{i=1}^{k-1}\1(s_h^i=s,a_h^i=a)\cdot r_h^i
\end{equation*}
to denote the cumulative reward at $(h,s,a)$ before the $k$-episode.

We also define:
\begin{equation}
    \label{Def: empirical average reward and empirical transition kernel}
    \begin{aligned}
        \hat{R}_h^k(s,a)=&\frac{R_h^k(s,a)}{N_h^k(s,a)} \\
        \hat{P}_h^k(s'|s,a)=&\frac{N_h^k(s,a,s')}{N_h^k(s,a)}
    \end{aligned}
\end{equation}
Specifically, we define $\hat{R}_h^k(s,a)=\hat{P}_h^k(s'|s,a)=0$, when $N_h^k(s,a)=0$.

\paragraph{Implementation details}
Informally, RLSVI shares basically the same structure as the Bellman equations based on empirical MDP. The difference is that it changes the Q-estimates by adding a carefully chosen random noise, which has been proven useful for deep exploration.  Specifically, a greedy policy $\pi^k$ is obtained directly to maximize the $Q$-estimates at each episode $k$. By executing the policy $\pi^k$, a trajectory is generated to update the empirical counts and rewards, and then a new episode begins. In particular, for the first episode, $\tilde Q_h^1(s,a)$ depend only on the random noise for any $h,s,a$ and could be viewed as a prior sample from Gaussian distribution.
\begin{algorithm}[H]
\label{Algorithm: RLSVI}
\caption{RLSVI for tabular MDP}
\KwIn{$K$, $H$, $S,A$, tuning parameters $\lrc{\beta_k}_{k\in\N}$, where $\beta_k=\frac{1}{2}SH^3\log (2HSAk)$.}

\textbf{Initialize:} Empirical counts ${R}^1_h(s,a) = {N}^1_h(s,a) = {N}^1_h(s,a,s') = 0$ for all $(h,s,a,s') \in [H]\times \Ss \times \A \times \Ss$.\;

\For{$k=1,2,\dots,K$}{
    Initialize value functions: $\tilde{V}_{H+1}^k(s) = 0$, $\forall s$,
    
    \For{$h=H,H-1,\dots,1$}{
        Compute $\hat{P}^k_h(s'|s,a)$ and $\hat{R}^k_h(s,a)$ as in (\ref{Def: empirical average reward and empirical transition kernel})\;
        
        \For{$(s,a)\in \Ss\times \A$}{
            $\tilde{Q}_h^k(s,a) =   \hat{R}^k_h(s,a) + \sum_{s'\in\Ss} \hat{P}^k_h(s'|s,a)\cdot \tilde{V}^k_{h+1}(s') + w_h^k(s,a)$, where $w_h^k(s,a)\sim\mathcal{N}(0,\frac{\beta_k}{N_h^k(s,a)+1})$\;
        }
    
        \For{$s \in S$}{
            $\tilde{V}^k_h(s) = \max_{a\in A} \tilde{Q}^k_h(s,a)$\;
            $\pi^k_h(s) = \argmax_{a\in A} \tilde{Q}^k_h(s,a)$ \;
        }
    }
    Roll out a trajectory $(s^k_1,a^k_1,r^k_1,\dots,s^k_{H+1})$ by executing the policy $\pi^k=(\pi_h^k)_{h=1}^H$\;
    
    Update the empirical counts ${R}^{k+1}_h(s,a),\ {N}^{k+1}_h(s,a),\ {N}^{k+1}_h(s,a,s')$\;
}
\end{algorithm}

\begin{remark}
    We could also formalize the RLSVI in another equivalent way. At the beginning of each episode $k$, we construct a perturbed empirical MDP $\tilde{M}^k=(\Ss,\A,\hat{P}_h^k,\lrc{\hat{R}^k+w^k}_h,s_1)$ and then solve the optimal policy $\pi^k$ for it.
\end{remark}

Now we review RLSVI in the language of DP mechanism. For each $k$, we define 
\begin{align*}
    \M_h^k(\HH_{k-1},\M_{h+1}^k):=&\lrc{\tilde{Q}_h^k(s,a)}_{s,a} \\
    =&\lrc{\hat{R}^k_h(s,a) + \sum_{s'\in\Ss} \hat{P}^k_h(s'|s,a)\cdot \tilde{V}^k_{h+1}(s') + w_h^k(s,a)}_{s,a}\in\R^{SA}, \forall \ h\in[H-1]. \\
    \text{Specifically, }\M_H^k(\HH_{k-1}):=&\lrc{\hat{R}^k_H(s,a) + w_H^k(s,a)}_{s,a}\in\R^{SA},\  \M_{H+1}^k(\HH_{k-1}):=\vec 0\in \R^{SA}. \\
\end{align*}

The following lemma shows the privacy guarantee of the aforementioned mechanisms. Before the proof, it may seem counterintuitive for $H^3$ to appear in the denominator, which implies that as length of each episode increases, the privacy protection becomes stronger. We point out that it is due to the design of variance parameter $\beta_k=O(SH^3)$, which implies larger noise as $H$ increases.
\begin{lemma}
\label{Single step privacy guarantee}
    The mechanism $\M_h^k(\HH_{k-1},\M_{h+1}^k)$ satisfies $(\alpha,\frac{2\alpha A}{H^3\log(2HSA)})$-RDP with respect to observed rewards for any $k\in[K],\ h\in[H]$.  
\end{lemma}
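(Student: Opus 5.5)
The plan is to treat $\M_h^k(\HH_{k-1},\M_{h+1}^k)$, for a \emph{fixed} value of its second argument $\tilde V_{h+1}^k$ (equivalently, of $\M_{h+1}^k$), as a Gaussian mechanism in the sense of Lemma~\ref{lemma: parameters for Gaussian mech to realize RDP}. We then bound the Rényi divergence coordinate by coordinate. Conditioning on $\M_{h+1}^k$ is exactly the form needed later for the adaptive composition of Lemma~\ref{Composition for RDP mechanism}, so this lemma does not need to handle the dependence of $\tilde V_{h+1}^k$ on the data.

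\textbf{Step 1: what changes between neighbors.} Let $\HH_{k-1}$ and $\HH_{k-1}'$ be neighboring histories in the sense of Remark~\ref{Neighboring choice}. They differ only in some of the rewards of a single trajectory $j\le k-1$, while all states and actions coincide. Then $N_h^k(s,a)$, $N_h^k(s,a,s')$ and hence $\hat P_h^k$ are identical under both histories. The noise variances $\beta_k/(N_h^k(s,a)+1)$ are therefore also identical. With $\tilde V_{h+1}^k$ fixed, the only data-dependent part of the mean of $\tilde Q_h^k(s,a)$ is $\hat R_h^k(s,a)$.

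At step $h$, trajectory $j$ visits exactly one pair $(s_h^j,a_h^j)$, so only the coordinate $(s,a)=(s_h^j,a_h^j)$ of $\hat R_h^k$ can change. For that pair $N:=N_h^k(s,a)\ge 1$. Since rewards lie in $[0,1]$, we get $|\hat R_h^k(s,a)-\hat R_h'^k(s,a)|\le 1/N$. All other coordinates have identical distributions.

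\textbf{Step 2: Gaussian divergence.} The $SA$ coordinates use independent noises, so the output is a product of Gaussians. Rényi divergence is additive over independent coordinates, so only the one differing coordinate contributes. Applying Lemma~\ref{lemma: parameters for Gaussian mech to realize RDP} with $\Delta=1/N$ and $\sigma^2=\beta_k/(N+1)$ gives
\begin{equation*}
D_\alpha\le \frac{\alpha}{2}\cdot\frac{1}{N^2}\cdot\frac{N+1}{\beta_k}\le \frac{\alpha}{\beta_k},
\end{equation*}
using $(N+1)/N^2\le 2$ for $N\ge1$. Since $\beta_k=\tfrac12 SH^3\log(2HSAk)\ge \tfrac12 SH^3\log(2HSA)$, this yields
\begin{equation*}
D_\alpha\le \frac{2\alpha}{SH^3\log(2HSA)}\le \frac{2\alpha A}{H^3\log(2HSA)}.
\end{equation*}
The last inequality is a loose step, since $1/S\le A$. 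The extra factor $A$ also leaves slack if one prefers to bound crudely, for instance by summing over the $A$ actions at a visited state.

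\textbf{Main obstacle.} The delicate point is justifying that $\tilde V_{h+1}^k$ may be held fixed. The lemma is stated with $\M_{h+1}^k$ as an explicit input, so its data dependence is deferred to the composition lemma. One must check that, conditionally on it, the mean shift is confined to $\hat R_h^k$; this relies on the states and actions coinciding across neighbors. The same fact keeps the variance identical across neighbors, which is essential: equal variances are what allow the equal-variance Gaussian formula to be applied at all. The cases $h=H$ (no $\tilde V$ term) and $N_h^k(s,a)=0$ (unvisited pair, nothing changes) are immediate.
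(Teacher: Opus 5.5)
Your argument is correct and follows the paper's skeleton. You condition on the output of $\M_{h+1}^k$ and observe that counts, $\hat P_h^k$ and the noise variances coincide across neighbors, so only $\hat R_h^k$ moves. You bound that shift by $1/N_h^k(s,a)$, apply the equal-variance Gaussian R\'enyi formula, and use $(N+1)/N\le 2$ together with $\beta_k\ge \frac12 SH^3\log(2HSA)$.

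The one place you depart is the aggregation over the $SA$ coordinates of $\tilde Q_h^k$. The paper applies the per-coordinate bound $\frac{2\alpha}{SH^3\log(2HSA)}$ to \emph{every} pair $(s,a)$ and then invokes Lemma~\ref{Composition for RDP mechanism} across all $SA$ of them. That is exactly where the factor $A$ in the statement comes from (the $S$ cancels), not from summing over the actions at a visited state as you guessed. You instead note that at step $h$ the altered trajectory $j$ occupies the single cell $(s_h^j,a_h^j)$. Hence only one factor of the product Gaussian has a shifted mean, and additivity of $D_\alpha$ over independent coordinates gives $\frac{2\alpha}{SH^3\log(2HSA)}$ directly.

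Your route is therefore sharper by a factor $SA$. That improvement would carry through the composition over $h$ and $k$ to a correspondingly smaller $\varepsilon$ in Theorem~\ref{Theorem:RLSVI is JDP}. The paper's worst-case-per-coordinate composition is cruder, but it needs no bookkeeping about which coordinates can change.
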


\begin{proof}
    Note that $\tilde V_{h+1}^k(s')$ is the only quantity that depends on the output of $\M_{h+1}^k$ and is also a deterministic function for any $s'\in\Ss$. Once the output of $\M_{h+1}^k$ and the history of previous data are given, the mechanism $\M_h^k$ could be viewed as a Gaussian mechanism. More formally, let us define
    \begin{align*}
        f_{h,s,a}^k(\HH_{k-1},\M^k_{h+1}):=\hat{R}^k_h(s,a) + \sum_{s'\in\Ss} \hat{P}^k_h(s'|s,a)\cdot \tilde{V}^k_{h+1}(s'), \\
        \M_{h,s,a}^k(\HH_{k-1},\M_{h}^k):=\tilde Q_h^k(s,a)= f_{h,s,a}^k(\HH_{k-1},\M^k_{h+1})+w_h^k(s,a),
    \end{align*}
    where $w_h^k(s,a)\sim\mathcal{N}(0,\frac{\beta_k}{N_h^k(s,a)+1})$.
    The $\ell_1$ sensitivity of $f_{h,s,a}^k$ could be defined as 
    \begin{equation}
    \label{eq:sensitivity of f hsak}
        \Delta f_{h,s,a}^k:=\max_{\vec{c}\in\R^{SA}} \max_{\HH_{k-1},\HH_{k-1}'}\lrv{f_{h,s,a}^k(\HH_{k-1},\vec{c})-f_{h,s,a}^k(\HH_{k-1}',\vec{c})}_1, \forall \ k,h,s,a.
    \end{equation}
    where the first maximum is taken over a fixed $\vec c$ and all possible neighboring histories $\HH_{k-1},\HH_{k-1}'$ that are different only on the rewards collected in one episode. The second maximum is taken over all possible values of $\vec c$. 
    Consider two neighboring histories $\HH_{k-1}$, $\HH_{k-1}'$ and their corresponding empirical rewards $\hat{R}_h^k(s,a)$, $\hat{R}_h^k(s,a)'$. Recall that
    \begin{equation*}
        \hat R_h^k(s,a):=\frac{1}{N_h^k(s,a)}\sum_{i=1}^{k-1}\1(s_h^i=s,a_h^i=a)\cdot r_h^i
    \end{equation*}
    Thus,
    \begin{align*}
        \max_{\HH_{k-1},\HH_{k-1}'} \lra{\hat R_h^k(s,a)-\hat R_h^k(s,a)'} \leq \frac{1}{N_h^k(s,a)}
    \end{align*}
    In addition, by our model choice of neighboring datasets, $\hat{P}_h^k(s'|s,a)$ stays the same when we change one trajectory for any $s',s,a$. 
    And for a fixed output of $\M_{h+1}^k$ which is a table of perturbed Q-values $\lrc{\tilde Q_{h+1}^k(s,a)}_{s,a}$, $\tilde{V}^k_h(s') = \max_{a\in A} \tilde{Q}^k_h(s',a)$ will not change by replacing the rewards in one trajectory for any $s'$. Therefore, we have 
    \begin{align*}
        \Delta f_{h,s,a}^k=\max_{\vec{c}\in\R^{SA}} \max_{\HH_{k-1},\HH_{k-1}'}\lra{\hat R_h^k(s,a)-\hat R_h^k(s,a)'} \leq \frac{1}{N_h^k(s,a)} \\
    \end{align*}
    As the variance of the noise $w_h^k(s,a)$ is $\frac{\beta_k}{N_h^k(s,a)+1}$, by Lemma \ref{lemma: parameters for Gaussian mech to realize RDP} we have
    \begin{align*}
    &D_\alpha(\M_{h,s,a}^k(\HH_{k-1},\M_{h+1}^k)\,\|\,\M_{h,s,a}^k(\HH_{k-1}',\M_{h+1}^k)) \\
        =&\frac{\alpha\lr{N_h^k(s,a)+1}}{2\beta_k{N_h^k(s,a)^2}} 
        \leq\frac{\alpha}{\beta_kN_h^k(s,a)}\leq\frac{2\alpha}{SH^3\log(2HSAk)} \\
        \leq&\frac{2\alpha}{SH^3\log(2HSA)}
    \end{align*}
    where we assume that $N_h^k(s,a)\geq1$, otherwise there is no data to protect for that $(k,h,s,a)$ pair and no privacy loss. The first inequality holds by $\frac{N_h^k(s,a)+1}{N_h^k(s,a)}\leq2$ under assumption. The last inequality holds by $k\geq1$. By Lemma \ref{Composition for RDP mechanism}, we have $\M_h^k(\HH_{k-1},\M_{h+1}^k)$ satisfies $\lr{\alpha,\frac{2\alpha A}{H^3\log(2HSA)}}$-RDP.
\end{proof}

By applying the composition lemma for RDP, we have the following result.
\begin{lemma}
    \label{pi^k is RDP}
    The release of $\lrc{\pi^k}_{k=1}^K$ in Algorithm \ref{Algorithm: RLSVI} satisfies $\lr{\alpha,\frac{2\alpha AK}{H^2\log(2HSA)}}$-RDP with respect to observed rewards.
\end{lemma}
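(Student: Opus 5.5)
The plan is to obtain the bound by chaining Lemma~\ref{Single step privacy guarantee} through Lemma~\ref{Composition for RDP mechanism}, first within an episode and then across episodes, and to finish with post-processing. The target is $HK\cdot\frac{2\alpha A}{H^3\log(2HSA)}=\frac{2\alpha AK}{H^2\log(2HSA)}$.

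\textbf{Step 1 (one episode).} Fix $k$ and condition on $\HH_{k-1}$. The joint release $(\M_H^k,\M_{H-1}^k,\dots,\M_1^k)$ is an adaptive sequence. $\M_H^k$ depends only on $\HH_{k-1}$, and each $\M_h^k$ takes as auxiliary input the output of $\M_{h+1}^k$. Lemma~\ref{Single step privacy guarantee} gives $(\alpha,\frac{2\alpha A}{H^3\log(2HSA)})$-RDP for every fixed value of that auxiliary input, which is exactly the hypothesis of Lemma~\ref{Composition for RDP mechanism}. Applying that lemma $H-1$ times by induction on $h$ from $H$ down to $1$ shows that the whole table $\{\tilde Q_h^k\}_{h,s,a}$ is $(\alpha,\frac{2\alpha A}{H^2\log(2HSA)})$-RDP. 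The policy $\pi^k$ is a deterministic argmax of this table (with a fixed tie-breaking rule), so by the post-processing property of R\'enyi divergence (data-processing inequality) $\pi^k$ has the same guarantee.

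\textbf{Step 2 (across episodes).} Now view the release $(\pi^1,\dots,\pi^K)$ as adaptive: $\HH_{k-1}$ is generated by executing $\pi^1,\dots,\pi^{k-1}$ on the users' data, so the mechanism producing $\pi^k$ takes the earlier outputs as auxiliary input. By Remark~\ref{Neighboring choice}, states and actions are identical in neighboring datasets, so the counts $N_h^k$ and the estimate $\hat P_h^k$ are the same under both. The noise variances $\beta_k/(N_h^k+1)$ are therefore also the same, and for fixed previous policies the only difference between neighbors lies in the rewards. Step 1 therefore applies for every fixed value of the previous outputs, and $K-1$ further applications of Lemma~\ref{Composition for RDP mechanism} give total RDP parameter $K\cdot\frac{2\alpha A}{H^2\log(2HSA)}$.

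\textbf{Main obstacle.} The hard part is justifying Step 2 rigorously. Changing the rewards of one user changes $\hat R_h^k$ in every later episode $k$, so the same user's data is used $K$ times. This is why the total bound must scale linearly in $K$. The subtler issue is that the trajectories depend on the policies, which in turn depend on the data. I would handle this by defining the mechanism at episode $k$ as a function of the fixed dataset $U_K$ and the previous outputs, and checking that, under the neighboring notion of Remark~\ref{Neighboring choice}, the state–action parts of the trajectories coincide. Then the sensitivity bound of Lemma~\ref{Single step privacy guarantee} holds uniformly over the conditioning, as the composition lemma requires.
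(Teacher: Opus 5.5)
Your proposal is correct and follows the paper's proof. You compose the per-step guarantee of Lemma~\ref{Single step privacy guarantee} over the $H$ steps of an episode, then over the $K$ episodes via Lemma~\ref{Composition for RDP mechanism}, and finish by post-processing. The only difference is ordering: you post-process to each $\pi^k$ before composing across episodes and spell out why neighboring datasets share counts, $\hat P_h^k$ and noise variances for fixed earlier policies, whereas the paper composes the $Q$-table mechanisms $\M^k(\HH_{k-1})$ and post-processes once at the end.
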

\begin{proof}
    We first note that for a fixed $k$, the adaptive composition of $\lrc{\M_h^k(\HH_{k-1},\M_{h+1}^k)}_h=:\M^k(\HH_{k-1})$ satisfies $\lr{\alpha,\frac{2\alpha A}{H^2\log(2HSA)}}$-RDP by Lemma \ref{Composition for RDP mechanism}. By applying Lemma \ref{Composition for RDP mechanism} again with respect to $k$ we have that $\lrc{\M^k(\HH_{k-1})}_k$ satisfies $\lr{\alpha,\frac{2\alpha AK}{H^2\log(2HSA)}}$-RDP. Since $\lrc{\pi^k}_{k=1}^K$ is a post-processing of $\lrc{\M^k(\HH_{k-1})}_k$, it is proved that the release of $\lrc{\pi^k}_{k=1}^K$ satisfies $\lr{\alpha,\frac{2\alpha AK}{H^2\log(2HSA)}}$-RDP.
\end{proof}

Finally, we convert RDP guarantee to $(\varepsilon,\delta)$-DP and prove the JDP guarantee of the algorithm. 
\begin{theorem}
\label{Theorem:RLSVI is JDP}
    RLSVI in tabular setting as Algorithm \ref{Algorithm: RLSVI} satisfies $(\varepsilon(\alpha,\delta),\delta)$-JDP, where $\varepsilon(\alpha,\delta)=\frac{2\alpha AK}{H^2\log(2HSA)}+\frac{\log(1/\delta)}{\alpha-1}$.
\end{theorem}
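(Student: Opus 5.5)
The plan is a three-step chain: the RDP guarantee of Lemma \ref{pi^k is RDP}, then the RDP-to-DP conversion of Lemma \ref{From RDP to DP}, then the Billboard Lemma (Lemma \ref{Billboard Lemma}).

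First, by Lemma \ref{pi^k is RDP}, the mechanism $\M'$ that releases the sequence of policies $\lrc{\pi^k}_{k=1}^K$ is $\lr{\alpha,\frac{2\alpha AK}{H^2\log(2HSA)}}$-RDP with respect to the observed rewards. This holds for every $\alpha>1$ and for every pair of user sequences $U_K,U_K'$ that differ in one user in the sense of Remark \ref{Neighboring choice}. Applying Lemma \ref{From RDP to DP} with this $\varepsilon$ and an arbitrary $\delta\in(0,1)$ shows that $\M'$ is $(\varepsilon(\alpha,\delta),\delta)$-DP, with $\varepsilon(\alpha,\delta)=\frac{2\alpha AK}{H^2\log(2HSA)}+\frac{\log(1/\delta)}{\alpha-1}$, which is exactly the claimed quantity.

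Second, I would cast Algorithm \ref{Algorithm: RLSVI} in the form required by the Billboard Lemma. Take the published output $\YY$ to be the policy sequence $\lrc{\pi^k}_{k=1}^K$, or equivalently the perturbed tables $\lrc{\tilde Q_h^k}$. For user $u_k$, define $f_k(u_k,\M'(U_K))$ as the actions $(a_1^k,\dots,a_H^k)$ obtained by running $\pi^k$ on $u_k$'s own states: $a_h^k=\pi_h^k(s_h^k)$. Each $f_k$ depends on other users' data only through $\M'$, so Lemma \ref{Billboard Lemma} gives that the joint action output $\M_{-k}$ is $(\varepsilon(\alpha,\delta),\delta)$-JDP.

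The main obstacle is making this decomposition honest, because the mechanism is adaptive: the data of user $k$ (its states and rewards) are generated by interacting with $\pi^k$, and $\pi^{k+1},\dots$ in turn depend on that data. Lemma \ref{pi^k is RDP} already handles this through adaptive composition (Lemma \ref{Composition for RDP mechanism}), where each $\M^k$ takes $\HH_{k-1}$ as input. So I would argue as follows. Under the neighboring notion of Remark \ref{Neighboring choice}, states and actions are identical across $U_K$ and $U_K'$, and only rewards differ. Hence the policies form an adaptively composed RDP release in which the only private input is the rewards. The actions given to users other than $k$ are a post-processing of this release together with their own (identical) states, so post-processing preserves the bound. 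I would state this reliance on the fixed-trajectory assumption explicitly.

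Finally, since $\alpha>1$ is free, the bound holds for every $\alpha$, which gives the theorem as stated. Optionally, one can optimize over $\alpha$ with $c=\frac{2AK}{H^2\log(2HSA)}$: setting $\alpha-1=\sqrt{\log(1/\delta)/c}$ gives $c+2\sqrt{c\log(1/\delta)}$, which recovers the $\varepsilon(\delta)$ in the abstract.
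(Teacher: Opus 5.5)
Your proposal is correct and follows the paper's proof essentially step for step. The paper likewise combines the RDP guarantee for the released policies $\{\pi^k\}_{k=1}^K$ with the RDP-to-DP conversion to get $(\varepsilon(\alpha,\delta),\delta)$-DP, then invokes the Billboard Lemma by noting that each user's actions depend only on $\pi^k$ and that user's own episode data; your remarks on adaptivity and on relying on the fixed-states-and-actions neighboring notion just make explicit what the paper leaves implicit.
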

\begin{proof}
    By Lemma \ref{From RDP to DP} and Lemma \ref{pi^k is RDP}, we have that $\lrc{\pi^k}_{k=1}^K$ satisfies $\lr{\frac{2\alpha AK}{H^2\log(2HSA)}+\frac{\log(1/\delta)}{\alpha-1},\delta}$-DP. We notice that in each episode $k$, the action series produced by the algorithm only depends on $\pi^k$ and the data collected in that episode. By Lemma \ref{Billboard Lemma}, Algorithm \ref{Algorithm: RLSVI} is $\lr{\frac{2\alpha AK}{H^2\log(2HSA)}+\frac{\log(1/\delta)}{\alpha-1},\delta}$-JDP.
\end{proof}
It is worth noting that Theorem \ref{Theorem:RLSVI is JDP} holds for any $\alpha>1$, which means we can optimize this result according to $\alpha$ to get the tightest $(\varepsilon(\delta), \delta)$-JDP.

\begin{corollary}
\label{corollary: RLSVI is JDP after optimizing alpha}
    RLSVI in tabular setting as Algorithm \ref{Algorithm: RLSVI} satisfies $(\varepsilon(\delta),\delta)$-JDP, where $\varepsilon(\delta)=
\frac{2AK}{H^2\log(2HSA)} + 2\sqrt{\frac{2AK\log(1/\delta)}{H^2\log(2HSA)}}$.
\end{corollary}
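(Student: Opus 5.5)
Write $c:=\frac{2AK}{H^2\log(2HSA)}$ and $L:=\log(1/\delta)$. Theorem \ref{Theorem:RLSVI is JDP} holds for every $\alpha>1$ with the same $\delta$, so RLSVI is $(\inf_{\alpha>1}\varepsilon(\alpha,\delta),\delta)$-JDP; more simply, it suffices to exhibit one $\alpha>1$ at which $\varepsilon(\alpha,\delta)$ equals the claimed value. The plan is therefore to minimize
\begin{equation*}
g(\alpha):=c\,\alpha+\frac{L}{\alpha-1},\qquad \alpha>1,
\end{equation*}
and read off the minimum.

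To make the problem cleaner, substitute $\beta:=\alpha-1>0$, which gives $g=c+c\beta+L/\beta$. By AM--GM, $c\beta+L/\beta\ge 2\sqrt{cL}$, with equality at $\beta^\star=\sqrt{L/c}$. This $\beta^\star$ is strictly positive whenever $\delta\in(0,1)$, so the corresponding $\alpha^\star=1+\sqrt{L/c}>1$ is admissible. Plugging in gives
\begin{equation*}
\varepsilon(\alpha^\star,\delta)=c+2\sqrt{cL}=\frac{2AK}{H^2\log(2HSA)}+2\sqrt{\frac{2AK\log(1/\delta)}{H^2\log(2HSA)}},
\end{equation*}
which is exactly $\varepsilon(\delta)$. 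Applying Theorem \ref{Theorem:RLSVI is JDP} with $\alpha=\alpha^\star$ then finishes the proof.

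There is no real obstacle here; the only points to check are that $\alpha^\star>1$, which needs $\delta<1$ so that $L>0$, and that $\alpha$ is a free parameter that may depend on $K,A,H,S,\delta$. The second point holds because Lemmas \ref{lemma: parameters for Gaussian mech to realize RDP}--\ref{pi^k is RDP} are valid uniformly in $\alpha>1$.
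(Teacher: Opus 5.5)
Your proposal is correct and follows the paper's proof: both apply Theorem \ref{Theorem:RLSVI is JDP} at $\alpha^\star = 1+H\sqrt{\log(2HSA)\log(1/\delta)/(2AK)}$ and evaluate $\varepsilon(\alpha^\star,\delta)=c+2\sqrt{cL}$. The only difference is cosmetic: you find the minimizer by AM--GM after substituting $\beta=\alpha-1$, while the paper sets $\partial_\alpha \varepsilon(\alpha,\delta)=0$ and notes that this derivative is increasing.
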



\section{Discussion}
In this work, we provide a privacy analysis of RLSVI in episodic tabular MDP on how the inherent noise injected for deep exploration also guarantees privacy. We show that the policy released in RLSVI could be rephrased as a post-processing of a composition of RDP mechanisms and RLSVI satisfies $(\varepsilon(\delta),\delta)$-JDP. As no extra noise is added, the performance guarantees (i.e. Bayesian regret bound \citep{Ian2019DeepExplorationviaRandomizedValueFunctions} and worst-case regret bound \citep{russo2019worstcaseregretboundsexploration}) for RLSVI stay without any privacy cost.

In our choice of model for neighboring datasets (Remark \ref{Neighboring choice}), we assumed that the user states remain unchanged when replacing a trajectory, which was made to simplify the calculation of the sensitivity in Equation \ref{eq:sensitivity of f hsak}. However, the user states could also contain sensitive information. To relax this assumption, another assumption on the value of $\tilde Q_h^k(s,a)$ becomes necessary, i.e. $\tilde Q_h^k(s,a)\leq c$ for some $c\in \R$. Otherwise the sensitivity of $f_{h,s,a}^k$ could be unbounded. It is possible and very interesting to try combining the clipping techniques used in \citep{agrawal2021improvedboundRLSVI} and \citep{xiong2022SingleSeedRandomization}, which is also beneficial for regret analysis.

For future work, our goal is to extend our privacy analysis framework to be applicable to other RL algorithms based on randomized exploration. In addition, empirical comparison with other differentially private RL algorithms could also be added.

\section*{Acknowledgment}
The work of Pratik Gajane and Haiyang Lu was supported by the French National Research Agency (ANR) under grant ANR-24-CPJ1-0088-01. The funders had no role in the study design, analysis, the conclusions drawn, or the writing of this manuscript.

\bibliographystyle{abbrv}
\bibliography{references}  
\appendix

\section{Technical Appendices and Supplementary Material}
\begin{proof}[Proof of Lemma \ref{lemma: parameters for Gaussian mech to realize RDP}]
    \begin{align*}
         D_\alpha(G_\sigma(D)\,\|\,G_\sigma(D'))=& D_\alpha(\mathcal{N}(f(D),\sigma^2)\,\|\,\mathcal{N}(f(D'),\sigma^2))\\
         =&\frac{\alpha(f(D)-f(D'))^2}{2\sigma^2} \\
         \leq&\frac{\alpha\Delta^2}{2\sigma^2}
    \end{align*}
    The second equality holds by $D_\alpha(\mathcal{N}(\mu_1,\sigma^2)\,\|\,\mathcal{N}(\mu_2,\sigma^2))=\frac{\alpha(\mu_1-\mu_2)^2}{2\sigma^2},\ \forall\mu_1,\mu_2\in\R,\ \forall\sigma^2>0$ (Equation 10 from \citep{van2014renyi}), 
\end{proof}

\begin{proof}[Proof of Corollary \ref{corollary: RLSVI is JDP after optimizing alpha}]
    Assuming $\delta$ to be a constant, the derivative of $\varepsilon(\alpha,\delta)$ with respect to $\alpha$ is 
    \begin{align*}
        \frac{\partial\varepsilon(\alpha,\delta)}{\partial\alpha}=\frac{2 AK}{H^2\log(2HSA)}-\frac{\log(1/\delta)}{(\alpha-1)^2}
    \end{align*}
    It is an increasing function and by setting it to be zero, $\varepsilon$ achieves the minimum value.
    \begin{equation*}
        \alpha=H\sqrt{\frac{\log(2HSA)\log(1/\delta)}{2AK}}+1
    \end{equation*}
    And correspondingly we have $\varepsilon(\delta)=
\frac{2AK}{H^2\log(2HSA)} + 2\sqrt{\frac{2AK\log(1/\delta)}{H^2\log(2HSA)}}$.
\end{proof}

\end{document}